\newcommand{\A}{\mathcal{A}}
\newcommand{\Ai}{\mathcal{A}_i}
\newcommand{\Ami}{\mathcal{A}_{-i}}
\newcommand{\R}{\mathbb{R}}
\newcommand{\E}{\mathbb{E}}
\newcommand{\KL}{\mathrm{KL}}
\newcommand{\softmax}{\mathrm{Softmax}}
\newtheorem{assumption}{Assumption}
\newtheorem{remark}{Remark}
\newtheorem{corollary}{Corollary}
\newtheorem{theorem}{Theorem}
\title{DIML: \underline{D}ifferentiable \underline{I}nverse \underline{M}echanism \underline{L}earning from Behaviors of Multi-Agent Learning Trajectories}
\author{
    Zhiyu An
    \qquad
    Wan Du
    \affiliations
    University of California, Merced
    \emails
    \{zan7, wdu3\}@ucmerced.edu
}
\begin{document}

\maketitle

\begin{abstract}
We study \emph{inverse mechanism learning}: recovering an unknown incentive-generating mechanism from observed strategic interaction traces of self-interested learning agents.
Unlike inverse game theory and multi-agent inverse reinforcement learning, which typically infer utility/reward \textit{parameters} inside a structured mechanism, our target includes unstructured mechanism---a (possibly neural) mapping from joint actions to per-agent payoffs.
Unlike differentiable mechanism design, which optimizes mechanisms forward, we infer mechanisms from behavior in an observational setting.
We propose \textbf{DIML}, a likelihood-based framework that differentiates through a model of multi-agent learning dynamics and uses the candidate mechanism to generate counterfactual payoffs needed to predict observed actions.
We establish identifiability of payoff differences under a conditional logit response model and prove statistical consistency of maximum likelihood estimation under standard regularity conditions.
We evaluate DIML with simulated interactions of learning agents across unstructured neural mechanisms, congestion tolling, public goods subsidies, and large-scale anonymous games.
DIML reliably recovers identifiable incentive differences and supports counterfactual prediction, where its performance rivals tabular enumeration oracle in small environments and its convergence scales to large, hundred-participant environments.
Code to reproduce our experiments is open-sourced.\footnote{https://github.com/ryeii/DIML-Differentiable-Inverse-Mechanism-Learning}

\end{abstract}

\section{Introduction}
Incentive mechanisms in modern socio-technical systems govern behavior in many of today’s most influential multi-agent systems, and they are increasingly implemented by opaque payout rules.
For example, advertisement platforms allocate visibility and payments through proprietary ranking and rebate rules; transportation and energy systems impose congestion charges and dynamic tolls; and digital labor and content platforms deploy bonuses, penalties, and revenue-sharing schemes to steer participant behavior.
In all of these settings, the mechanism (a rule mapping agents’ joint actions to payoffs) is often proprietary, adaptive, or learned, while the observable data consists primarily of agents’ interaction traces over time.
Understanding, auditing, or predicting the effects of such systems therefore requires the ability to reason about incentives without direct access to the mechanism itself.
This motivates a fundamental inverse problem: 
\begin{center}
    \emph{Can we infer the incentive-generating mechanism from observed strategic behavior alone?}
\end{center}
Solving this problem enables high-impact applications in auditing and regulation of incentive-driven systems, such as reconstructing hidden pricing, subsidy, or transfer rules from interaction data, supporting fairness audits, market oversight, and diagnosis of strategic dynamics in platforms and auctions.
It also enables counterfactual analysis, i.e. predicting how behavior would change under alternative populations or policies when the mechanism itself is unobserved.

Despite the centrality of incentives in these domains, existing methods fall short of addressing this inverse problem in realistic settings.
Several adjacent literatures study inverse problems in strategic or multi-agent environments, but they target different unknowns, operate at mismatched causal levels, or rely on assumptions that limit their applicability.
For example,
classical \emph{inverse game theory} seeks to infer payoff parameters that rationalize observed play under equilibrium, regret, or correlated equilibrium notions~\cite{waugh2013computational,bestick2013inverse}.
Recent extensions improve computational efficiency and robustness~\cite{goktas2025efficient,cui2025inverse}, but the inferred object remains agents’ utilities within a structured mechanism, rendering these method powerless when the mechanism is unstructured (e.g. neural).
In addition, these approaches typically assume equilibrium or stationary behavior and do not exploit the rich transient dynamics produced by learning agents.
As a result, they inherit fundamental non-identifiability issues: many payoff structures can rationalize the same equilibrium behavior.

A second line of work studies \emph{multi-agent inverse reinforcement learning} (MA-IRL), which infers agents’ reward functions in Markov games under specified solution concepts~\cite{yu2019multi}.
Recent work emphasizes reward ambiguity and feasible reward sets~\cite{freihaut2024feasible} or incorporates bounded rationality and theory-of-mind reasoning~\cite{wu2023multiagent}.
While powerful, MA-IRL suffers the same flaw as previous works, treating the mechanism as structured and attributes behavior to latent agent preferences.
In contrast, in many platform and market settings, incentives are externally imposed and intentionally coupled across agents through a mechanism.
Inferring independent rewards ignores this incentive coupling and conflates agent preferences with mechanism-induced payoffs.

A third, complementary literature focuses on \emph{differentiable mechanism design} and incentive optimization.
Neural mechanism design frameworks such as RegretNet~\cite{dutting2017optimal} and AMenuNet~\cite{duan2023scalable} parameterize mechanisms as neural networks and optimize them forward for revenue or welfare.
Related work in bilevel and meta-gradient optimization differentiates through agents’ learning dynamics to design incentives that shape behavior~\cite{yang2021adaptive,thoma2024contextual}.
These approaches demonstrate that incentive rules can be learned and differentiated through, but they assume full control and observability of the mechanism.
They do not address the inverse problem of reconstructing an unknown mechanism from observed behavior.

Taken together, existing work either infers \emph{preferences rather than incentives}, assumes \emph{equilibrium rather than learning}, or solves a \emph{forward design problem rather than an inverse reconstruction problem}.
To our knowledge, no prior work directly studies how to infer a high-capacity incentive mechanism from multi-agent learning trajectories.

\paragraph{Core challenges.}
Inverse mechanism learning poses several intertwined challenges.
First, payoffs are typically unobserved, and incentive mechanisms may be high-dimensional or neural, ruling out tabular or closed-form inversion.
Second, equilibrium behavior alone is insufficient for identification: distinct mechanisms can induce identical equilibria.
Third, observed behavior is potentially generated by \emph{learning agents}, whose updates depend on counterfactual payoffs for actions not taken.
Any successful inverse method must therefore (i) exploit off-equilibrium behavior, (ii) scale to expressive mechanism classes, and (iii) explicitly model how incentives shape learning dynamics over time.

\paragraph{Our approach.}
We propose \emph{Differentiable Inverse Mechanism Learning} (DIML), a framework that reconstructs an unknown incentive mechanism by differentiating through a model of multi-agent learning.
The key insight is that, given a candidate mechanism $M_\theta$, one can evaluate it counterfactually on joint actions $(a_i',a_{-i})$ to compute the payoffs an agent would have received had it deviated.
These counterfactual payoffs are precisely the quantities required by many learning rules, including logit response, no-regret updates, and policy-gradient-style learners.
By unrolling the learning dynamics and matching the induced action distributions to observed trajectories, DIML defines a differentiable likelihood over trajectories as a function of the mechanism parameters $\theta$.

This perspective shifts the causal level of inference.
Rather than asking which rewards rationalize observed actions, DIML asks which \emph{incentive structure} explains how agents adapt over time.
Rather than relying on equilibrium assumptions, it leverages transient learning dynamics as identifying signal.
Rather than optimizing incentives forward, it enables inverse reconstruction and counterfactual auditing of existing systems.

DIML resolves the challenges of inverse mechanism learning in three ways.
First, by modeling learning explicitly, it exploits off-equilibrium behavior that equilibrium-based inverse methods discard, reducing non-identifiability.
Second, by parameterizing mechanisms as neural networks and optimizing via backpropagation, it scales to complex, high-capacity incentive rules.
Third, by separating behavioral fit from mechanism recovery and counterfactual validity, it enables principled evaluation of whether an inferred mechanism captures true incentives rather than merely imitating observed behavior.

\paragraph{Contributions.}
We make the following contributions:
\begin{itemize}[leftmargin=1.25em]
    \item We formalize \emph{inverse mechanism learning from multi-agent learning behavioral trajectories} as a distinct and practical problem, clarifying its relationship to inverse game theory, MA-IRL, and differentiable mechanism design.
    \item We introduce \textbf{DIML}, a general likelihood-based framework that infers neural incentive mechanisms by differentiating through multi-agent learning dynamics.
    \item We provide identifiability and consistency results for payoff differences under a conditional logit response model, elucidating inherent gauge freedoms and conditions under which inverse mechanism learning is well-posed.
    \item We propose an evaluation protocol that distinguishes behavioral imitation from mechanism recovery and counterfactual validity, and benchmark DIML across a diverse suite of environments and mechanism families.
\end{itemize}

\section{Related Work}

\begin{table*}[t]
\centering
\small
\caption{Comparison with related work.
Legend:
Inf/Beh = inference from behavior;
Mec/Tar = mechanism is inference target (joint-action $\rightarrow$ payoff map);
N/Mec = neural or high-capacity mechanism;
MA = multi-agent setting;
L/Traj = uses learning trajectories (not only equilibrium/static play);
Diff = differentiates through learning or response dynamics;
N/PO = no payoff observation required;
CF = supports counterfactual evaluation.
Direction labels indicate the primary research paradigm of each work.
Horizontal rules group papers by research direction: inverse utility inference, forward mechanism design, recent inverse-theoretic work, and DIML.}
\label{tab:related}
\begin{tabular}{llccccccccc}
\toprule
Direction &
Work &
Inf/Beh &
Mec/Tar &
N/Mec &
MA &
L/Traj &
Diff &
N/PO &
CF \\
\midrule
IGT &
MaxEnt-IGT~\cite{waugh2013computational} &
\checkmark & & & \checkmark & & & \checkmark & \\
IGT &
ICE-IGT~\cite{bestick2013inverse} &
\checkmark & & & \checkmark & & & \checkmark & \\
MAIRL &
MA-AIRL~\cite{yu2019multi} &
\checkmark & & & \checkmark & \checkmark & & \checkmark & \checkmark \\
MAIRL &
ToM-MAIRL~\cite{wu2023multiagent} &
\checkmark & & & \checkmark & \checkmark & & \checkmark & \checkmark \\
\midrule
MechDes &
RegretNet~\cite{dutting2017optimal} &
 & \checkmark & \checkmark & \checkmark & & & & \\
MechDes &
AMenuNet~\cite{duan2023scalable} &
 & \checkmark & \checkmark & \checkmark & & & & \\
Bilevel &
CB-RL~\cite{thoma2024contextual} &
 & \checkmark &  & \checkmark & \checkmark & \checkmark & & \checkmark \\
Bilevel &
PA-Bandits~\cite{scheid2024incentivized} &
 & \checkmark &  &  & \checkmark & \checkmark & & \checkmark \\
\midrule
InvTheory &
Feasible-MAIRL~\cite{freihaut2024feasible} &
\checkmark & & & \checkmark &  &  & \checkmark & \\
InvTheory &
IML~\cite{goktas2025efficient} &
\checkmark & & & \checkmark &  &  & \checkmark & \\
InvTheory &
Incenter-IGT~\cite{cui2025inverse} &
\checkmark & & & \checkmark &  &  & \checkmark & \\
InvTheory &
Entropy-IGT~\cite{liaodecoding} &
\checkmark & & & \checkmark &  &  & \checkmark & \\
\midrule
\textbf{InvMech} &
\textbf{DIML (ours)} &
\checkmark & \checkmark & \checkmark & \checkmark & \checkmark & \checkmark & \checkmark & \checkmark \\
\bottomrule
\end{tabular}
\end{table*}


\subsection{Inverse game theory and inverse equilibrium}
Inverse game theory infers payoff parameters that rationalize observed play under equilibrium, regret, or related constraints.
Representative approaches include inverse equilibrium via regret and maximum entropy principles~\cite{waugh2013computational} and inverse correlated equilibrium formulations~\cite{bestick2013inverse}.
These methods typically infer utilities within a fixed game form and often operate at equilibrium or stationary summary levels rather than learning trajectories.

Recent work continues to emphasize both computational structure and non-identifiability.
Goktas et al.~\cite{goktas2025efficient} develop polynomial-time methods for inverse multiagent learning and related ``simulacral'' variants, formulated as min--max problems under oracle access assumptions.
Cui~\cite{cui2025inverse} proposes an incenter-based approach to inverse game theory via inverse variational inequalities, focusing on robustness to observed equilibria.
These contributions sharpen the inverse-game-theory toolkit, but they still target \emph{player utilities} (or low-dimensional game parameters) rather than reconstructing an \emph{incentive mechanism} as a joint-action-to-payoff map, and typically do not leverage differentiable unrolled learning dynamics as the primary source of signal.

\subsection{Multi-agent IRL, reward ambiguity, and solution concepts}
Multi-agent IRL (MAIRL/MIRL) aims to infer agents' reward functions in Markov games under specific solution concepts.
Adversarial methods such as MA-AIRL~\cite{yu2019multi} scale to high-dimensional settings via (pseudo-)likelihood objectives, while other work models bounded rationality or richer social reasoning.
For instance, Wu et al.~\cite{wu2023multiagent} incorporate Theory-of-Mind reasoning to relax the assumption that agents know each other’s goals a priori.

A central theme is \emph{reward ambiguity}: many reward functions can rationalize the same equilibrium behavior.
Freihaut and Ramponi~\cite{freihaut2024feasible} characterize feasible reward sets in Markov games and use entropy regularization to obtain uniqueness properties and sample complexity analyses.
These results underscore why inferring \emph{agent rewards} is ill-posed without additional structure.
DIML differs in target and in coupling: the unknown is not each agent's private reward in a fixed game, but rather a \emph{shared mechanism} that jointly maps actions (and optionally context) to payoffs, creating strategic coupling across agents.

\subsection{Differentiable mechanism design and bilevel incentive optimization}
Differentiable mechanism design learns mechanisms forward by optimizing welfare while encouraging approximate incentive compatibility, e.g.\ RegretNet~\cite{dutting2017optimal}.
More recent work improves expressivity and incentive properties; for example, Duan et al.~\cite{duan2023scalable} propose a scalable neural architecture for DSIC affine maximizer auctions.
At the theory/engineering boundary, D\"utting et al.~\cite{dutting2017optimal} provide a detailed differentiable-economics treatment of optimal auctions through deep learning.

A separate but closely related line studies incentive design as a bilevel or Stackelberg problem where a principal shapes followers' learning.
Thoma et al.~\cite{thoma2024contextual} introduce Contextual Bilevel Reinforcement Learning (CB-RL) and provide convergence analysis with hypergradients estimated from follower trajectories.
Scheid et al.~\cite{scheid2024incentivized} study principal-agent bandit games where a principal learns an incentive policy to influence an agent with misaligned objectives.
These works are fundamentally \emph{forward} problems: optimize incentives to achieve a designer objective.
DIML instead addresses the \emph{inverse} observational problem: reconstruct an unknown mechanism from action traces, enabling auditing, reverse-engineering, and counterfactual analysis when the mechanism is not directly available.

Across all of the above areas, the recurring gaps are:
(i) equilibrium-centric inference that discards learning dynamics,
(ii) reward inference that treats strategic coupling as fixed rather than mechanism-induced,
and (iii) forward mechanism optimization without inverse reconstruction.
DIML targets the missing regime: \emph{inverse mechanism learning with neural mechanisms from multi-agent learning trajectories}.

\section{Problem Setup}
We consider $n$ agents indexed by $i\in\{1,\dots,n\}$ with finite action sets $\Ai$, joint action space $\A \coloneqq \prod_{i=1}^n \Ai$, and joint actions $a_{1:n}=(a_1,\dots,a_n)\in\A$.
An \emph{incentive mechanism} is a mapping
\[
M_\theta:\A\to \R^n,\qquad M_\theta(a_{1:n}) = (r_1,\dots,r_n),
\]
parameterized by $\theta$ (e.g., a neural network).

We observe a dataset of interaction trajectories
\[
D=\{\tau^{(m)}\}_{m=1}^M,\qquad \tau^{(m)} = (a_{1:n}^{(m)}(t))_{t=1}^{T_m},
\]
generated by agents learning while interacting under an unknown ground-truth mechanism $M_{\theta^\star}$.
The primary setting assumes only actions are observed; optional variants incorporate observed payoffs.

\subsection{Agent learning model and induced trajectory likelihood}
DIML assumes a differentiable family of learning dynamics $\Pi$.
In the main method and experiments we use a differentiable logit-Q model:
each agent maintains scores $Q_i(t)\in\R^{|\Ai|}$; given opponent action $a_{-i}(t)$ and candidate mechanism $M_\theta$, we define counterfactual payoffs
\begin{equation}
\label{eq:counterfactual}
u^\theta_i(a_i, a_{-i}(t)) \;\coloneqq\; [M_\theta(a_i, a_{-i}(t))]_i,\qquad \forall a_i\in\Ai.
\end{equation}
Scores update by an exponentially-weighted moving average
\begin{equation}
\label{eq:q_update}
Q_i(t+1) = (1-\alpha)Q_i(t) + \alpha \, u^\theta_i(\cdot, a_{-i}(t)),
\end{equation}
and actions are chosen via a logit policy with temperature $\beta>0$ and optional exploration $\varepsilon$:
\begin{equation}
\label{eq:logit_policy}
\pi^\theta_i(t+1) \;=\; (1-\varepsilon)\,\softmax(\beta Q_i(t+1)) + \varepsilon\,\mathrm{Unif}(\Ai).
\end{equation}
Assuming conditional independence of agents' actions given internal states $(Q_i(t))_i$, this defines a differentiable likelihood $p_\theta(\tau)$ over trajectories.

\subsection{Inverse mechanism learning objective}
We estimate $\theta$ by maximum likelihood with mechanism-structure regularization: for $\hat\theta \in \arg\min_{\theta}\; \mathcal{L}(\theta;D) + \lambda \mathcal{R}(\theta),$
\begin{equation}
    \label{eq:diml_obj}
\mathcal{L}(\theta;D)\coloneqq -\sum_{m=1}^M \log p_\theta(\tau^{(m)}),
\end{equation}
where $\mathcal{R}(\theta)$ can encode mechanism priors such as bounded payments, smoothness, or approximate budget balance $\E[(\sum_i r_i)^2]$.

\section{DIML: Differentiable Inverse Mechanism Learning}
DIML follows the principle: \emph{use a candidate mechanism to impute counterfactual payoffs needed to predict learning-driven behavior, then fit the mechanism by differentiable likelihood.}
The key modeling move is Equation~\eqref{eq:counterfactual}: since the mechanism is a joint-action-to-payoff map, conditioning on observed opponent actions $a_{-i}(t)$ enables counterfactual evaluation of the payoff the agent would have obtained under any alternative action $a_i'\in\Ai$.
This transforms an action-only dataset into a supervised signal for the mechanism through the agents' learning dynamics.

\subsection{From mechanisms to counterfactual payoff tensors}
For each time step $t$ and each agent $i$, define the \emph{counterfactual payoff vector}
\[
u_i^\theta(\cdot, a_{-i}(t)) \in \R^{|\Ai|},
u_i^\theta(a_i', a_{-i}(t)) \coloneqq [M_\theta(a_i', a_{-i}(t))]_i.
\]
In implementation, DIML constructs a \emph{counterfactual payoff tensor}
\[
U^\theta(t) \in \R^{n \times |\Ai|},\qquad
U^\theta_{i,a}(t) = u_i^\theta(a, a_{-i}(t)),
\]
by enumerating each agent's action while holding opponents fixed at their realized actions.
This tensor is the minimal object needed to update the agents' internal learning state and to evaluate the probability assigned to the next observed action.
Importantly, $U^\theta(t)$ is differentiable in $\theta$ whenever $M_\theta$ is differentiable.

\paragraph{Why counterfactual evaluation is essential.}
If one only evaluates $M_\theta$ on realized joint actions $a_{1:n}(t)$, then $M_\theta$ affects the likelihood only through realized payoffs, which are unobserved in our primary setting.
Counterfactual evaluation provides a mechanism-dependent proxy for the latent feedback signal that learners use to update policies, making inverse recovery possible from actions alone.

\subsection{Learning dynamics model}
DIML requires a differentiable model of how agents map past interaction to future actions.
We instantiate this with logit-Q dynamics (Eq.~\eqref{eq:q_update}--\eqref{eq:logit_policy}) for three reasons:
(i) it yields a tractable likelihood,
(ii) it captures bounded rationality and exploration through $(\beta,\varepsilon)$,
and (iii) it is a common surrogate for a broad class of smooth best-response and stochastic approximation learning rules.
Other differentiable dynamics (e.g., multiplicative weights, policy-gradient updates) can be substituted provided they consume counterfactual payoffs and produce differentiable choice probabilities.

\subsection{Unrolled likelihood and differentiation through learning}
Given a trajectory $\tau=(a_{1:n}(t))_{t=1}^T$ and initial scores $Q(1)$, the log-likelihood under~\eqref{eq:q_update}--\eqref{eq:logit_policy} factorizes as
\begin{equation}
\label{eq:traj_ll_factor}
\log p_\theta(\tau) \;=\; \sum_{t=1}^{T-1}\sum_{i=1}^n \log \pi_i^\theta(t+1)\bigl(a_i(t+1)\bigr),
\end{equation}
where $\pi_i^\theta(t+1)=\pi_i(Q_i(t+1))$ and
\[
Q_i(t+1)=\mathcal{U}\bigl(Q_i(t),\,u_i^\theta(\cdot,a_{-i}(t))\bigr)
\]
for the update operator $\mathcal{U}$ in Eq.~\eqref{eq:q_update}.
Thus $\theta$ affects the likelihood only through the sequence of counterfactual payoff tensors $\{U^\theta(t)\}_{t=1}^{T-1}$ and the induced internal states $\{Q(t)\}$.

\paragraph{Conditional independence assumption.}
To obtain Eq.~\eqref{eq:traj_ll_factor}, DIML assumes that conditional on internal states $(Q_i(t))_{i=1}^n$, agents choose actions independently:
$
p_\theta(a_{1:n}(t+1)\mid Q(t+1)) = \prod_{i=1}^n \pi_i^\theta(t+1)(a_i(t+1)).
$
This is standard in MARL system identification and allows efficient training.
Notably, this does \emph{not} assume independence in the strategic process; coupling enters through the mechanism-dependent counterfactual payoffs $u_i^\theta(\cdot,a_{-i}(t))$, which depend on others' realized actions.

\paragraph{Backpropagation through learning.}
We optimize~\eqref{eq:diml_obj} by differentiating the unrolled computation graph:
\[
\frac{\partial}{\partial\theta}\log p_\theta(\tau)
= \sum_{t,i}\frac{\partial}{\partial\theta}\log \pi_i^\theta(t+1)(a_i(t+1)),
\]
where each term differentiates through (i) the policy map $\pi_i(\cdot)$, (ii) the learning update recursion for $Q_i$, and (iii) the counterfactual mechanism evaluations forming $u_i^\theta(\cdot,a_{-i}(t))$.
In practice, we apply truncated BPTT for long horizons and optionally checkpoint intermediate states to control memory.

\noindent\textbf{Computational complexity and scaling.} For discrete actions, computing $U^\theta(t)$ by naive enumeration requires $O(n|\Ai|)$ forward passes of $M_\theta$ per time step.
When $M_\theta$ is a neural network evaluated on batches, this is efficiently vectorized: for each agent $i$, one constructs a batch of $|\Ai|$ counterfactual joint actions that differ only in coordinate $i$ and runs a single batched forward pass.
Overall time per trajectory scales as
\[
O\bigl(T \cdot n \cdot |\Ai| \cdot C_{M}\bigr),
\]
where $C_M$ is the cost of a forward pass of $M_\theta$.
This cost is the price of action-only identification: we must evaluate off-path (counterfactual) outcomes.
For larger action spaces, one can replace full enumeration with (i) importance-sampled subsets of counterfactual actions, (ii) structured factorization of $M_\theta$ that reduces evaluation cost, or (iii) continuous-action variants using gradients rather than enumeration.

\subsection{Algorithm}
Algorithm~\ref{alg:diml} summarizes DIML for stateless mechanisms.
The contextual (Markov) variant extends $M_\theta$ to $M_\theta(s_t,a_{1:n}(t))$ and computes counterfactual payoffs by holding $s_t$ fixed.

\begin{algorithm}[t]
\small
\caption{DIML (unrolled likelihood, stateless)}
\label{alg:diml}
\begin{algorithmic}[1]
\REQUIRE Trajectories $D=\{\tau^{(m)}\}$ of joint actions; learning hyperparameters $(\alpha,\beta,\varepsilon)$; mechanism model $M_\theta$
\STATE Initialize $\theta$ (and optional learnable $Q(1)$)
\FOR{epoch $=1,2,\dots$}
    \FOR{minibatch of trajectories $\tau=(a_{1:n}(t))_{t=1}^T$}
        \STATE Initialize $Q_i(1)$ for all agents $i$
        \STATE $\mathcal{J}\leftarrow 0$
        \FOR{$t=1$ to $T-1$}
            \FOR{agent $i=1$ to $n$}
                \STATE Compute $u_i^\theta(\cdot,a_{-i}(t))$ by evaluating $M_\theta(a_i',a_{-i}(t))$ for all $a_i'\in\Ai$
                \STATE Update $Q_i(t+1)\leftarrow (1-\alpha)Q_i(t) + \alpha\,u_i^\theta(\cdot,a_{-i}(t))$
                \STATE $\pi_i^\theta(t+1)\leftarrow (1-\varepsilon)\softmax(\beta Q_i(t+1))+\varepsilon\,\mathrm{Unif}(\Ai)$
                \STATE $\mathcal{J}\leftarrow \mathcal{J} - \log \pi_i^\theta(t+1)\bigl(a_i(t+1)\bigr)$
            \ENDFOR
        \ENDFOR
        \STATE $\mathcal{J}\leftarrow \mathcal{J} + \lambda \mathcal{R}(\theta)$
        \STATE Update $\theta$ by gradient descent on $\mathcal{J}$
    \ENDFOR
\ENDFOR
\RETURN $\hat\theta$
\end{algorithmic}
\end{algorithm}

\section{Theoretical Analysis}
This section provides two guarantees in a canonical setting: (i) identifiability of payoff \emph{differences} from conditional logit responses, and (ii) consistency of maximum likelihood estimation in an identifiable parametric class.
These results justify using action-only data for inverse mechanism learning and clarify the necessary gauge (additive constants).

\subsection{Identifiability under conditional logit response}
We first consider a simplified conditional response model that abstracts away internal learning state.
This model is the conditional building block of many stochastic learning dynamics, including the logit-Q dynamics in DIML.

\begin{assumption}[Conditional logit response]
\label{assump:logit}
Fix $\beta>0$.
For each time step $t$ and each agent $i$, conditional on opponent action $a_{-i}(t)$ the agent chooses action $a_i(t)$ according to
\begin{equation}
\label{eq:cond_logit}
\mathbb{P}_\theta(a_i(t)=a \mid a_{-i}(t)) = \frac{\exp(\beta\,u_i^\theta(a,a_{-i}(t)))}{\sum_{a'\in\Ai}\exp(\beta\,u_i^\theta(a',a_{-i}(t)))}.
\end{equation}
\end{assumption}

\begin{theorem}[Identifiability of payoff differences]
\label{thm:ident_diffs}
Fix agent $i$ and opponent action $a_{-i}\in\Ami$.
Under Assumption~\ref{assump:logit} with known $\beta>0$, the conditional choice probabilities $\{\mathbb{P}_\theta(a_i=a\mid a_{-i})\}_{a\in\Ai}$ identify the payoff differences
$
u_i^\theta(a,a_{-i}) - u_i^\theta(a',a_{-i})
$
for all $a,a'\in\Ai$.
Equivalently, the payoff vector $u_i^\theta(\cdot,a_{-i})$ is identifiable up to an additive constant $c_i(a_{-i})$ independent of $a$.
\end{theorem}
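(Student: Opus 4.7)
The plan is to exploit the classical softmax identification argument: payoff differences appear linearly in the log-odds of the conditional logit, while additive shifts cancel in the normalizer. Since $\beta$ is known, recovering the differences from the choice probabilities reduces to a one-line algebraic inversion, and the gauge freedom is verified by direct substitution.

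Concretely, I would fix agent $i$ and opponent profile $a_{-i}$, abbreviate $v(a) \coloneqq u_i^\theta(a,a_{-i})$ and $p(a) \coloneqq \mathbb{P}_\theta(a_i = a \mid a_{-i})$, and note that Assumption~\ref{assump:logit} guarantees $p(a) > 0$ for every $a \in \mathcal{A}_i$ (since exponentials are positive and $\mathcal{A}_i$ is finite). Then for any pair $a, a' \in \mathcal{A}_i$, the normalizer in~\eqref{eq:cond_logit} cancels in the ratio, giving
\begin{equation*}
\log \frac{p(a)}{p(a')} \;=\; \beta\bigl(v(a) - v(a')\bigr),
\end{equation*}
so $v(a) - v(a') = \beta^{-1}\log(p(a)/p(a'))$ is identified from the conditional distribution. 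This handles the forward direction: payoff differences are determined by choice probabilities.

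For the converse (gauge freedom), I would show that any other payoff vector $\tilde v(\cdot) = v(\cdot) + c$ with $c \in \mathbb{R}$ induces exactly the same conditional distribution: substituting into~\eqref{eq:cond_logit}, the factor $\exp(\beta c)$ appears in both numerator and denominator and cancels, so $p(\cdot)$ depends on $v(\cdot)$ only through the equivalence class modulo additive constants. Combining the two directions, the map from equivalence classes $[v] \in \mathbb{R}^{|\mathcal{A}_i|}/\mathbb{R}\mathbf{1}$ to conditional distributions on $\mathcal{A}_i$ is a bijection, which is exactly the statement that $v$ is identified up to an additive constant $c_i(a_{-i})$ that may depend on the conditioning event but not on $a$.

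I do not anticipate a real obstacle here: the argument is essentially the textbook identification of discrete-choice utilities under a known scale parameter. The only subtlety worth flagging is that the additive constant is allowed to depend on $a_{-i}$, since the theorem is stated pointwise in the opponent profile; removing this gauge across different $a_{-i}$ values (i.e.\ pinning a global level for $M_\theta$) requires extra structure such as a budget-balance or normalization regularizer, which is discussed elsewhere in the paper and not needed for this theorem.
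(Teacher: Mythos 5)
Your proposal is correct and follows essentially the same route as the paper's proof: take the ratio of choice probabilities so the normalizer cancels, invert the log-odds using the known $\beta$, and observe that an additive constant cancels in the softmax. The extra care you take (noting $p(a)>0$ and phrasing the gauge freedom as a bijection on equivalence classes modulo $\mathbb{R}\mathbf{1}$) is a minor refinement of the same argument, not a different approach.
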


\begin{proof}
Let $p(a)\coloneqq \mathbb{P}_\theta(a_i=a\mid a_{-i})$.
For any $a,a'\in\Ai$,
\[
\frac{p(a)}{p(a')} = \exp(\beta(u_i^\theta(a,a_{-i})-u_i^\theta(a',a_{-i}))).
\]
Taking logs yields $u_i^\theta(a,a_{-i})-u_i^\theta(a',a_{-i}) = \frac{1}{\beta}\left(\log p(a)-\log p(a')\right),$
which is uniquely determined by $p(\cdot)$ and $\beta$.
Thus all differences are identified; adding a constant $c_i(a_{-i})$ to $u_i^\theta(\cdot,a_{-i})$ leaves~\eqref{eq:cond_logit} unchanged. 
\end{proof}

Theorem~\ref{thm:ident_diffs} clarifies an inherent gauge symmetry: only \emph{relative} incentives matter for behavior under logit response.

\begin{corollary}[Gauge fixing via normalization]
\label{cor:gauge}
If we impose, for all $i$ and all $a_{-i}\in\Ami$, the normalization
$
\sum_{a_i\in\Ai} u_i^\theta(a_i,a_{-i}) = 0,
$
then the conditional choice probabilities identify $u_i^\theta(a_i,a_{-i})$ uniquely for all $a_i,a_{-i}$.
\end{corollary}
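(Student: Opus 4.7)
The plan is to reduce the corollary to Theorem~\ref{thm:ident_diffs} and then use the normalization constraint to pin down the single remaining degree of freedom per $a_{-i}$. Fix any agent $i$ and opponent profile $a_{-i}\in\Ami$. By Theorem~\ref{thm:ident_diffs}, the conditional choice probabilities $\{\mathbb{P}_\theta(a_i=a\mid a_{-i})\}_{a\in\Ai}$ uniquely determine every difference $u_i^\theta(a,a_{-i})-u_i^\theta(a',a_{-i})$. Thus the entire vector $u_i^\theta(\cdot,a_{-i})\in\R^{|\Ai|}$ is determined up to a single additive scalar $c_i(a_{-i})$, and this scalar is the only source of ambiguity.

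Next I would exhibit a unique choice of $c_i(a_{-i})$ consistent with the zero-sum normalization. Pick any reference action $a_0\in\Ai$ and define the known quantities $d(a)\coloneqq u_i^\theta(a,a_{-i})-u_i^\theta(a_0,a_{-i})$, which by Theorem~\ref{thm:ident_diffs} are pinned down by $\{\mathbb{P}_\theta(a_i=a\mid a_{-i})\}$ and $\beta$. Writing $u_i^\theta(a,a_{-i}) = u_i^\theta(a_0,a_{-i}) + d(a)$ and imposing $\sum_{a\in\Ai} u_i^\theta(a,a_{-i})=0$ yields
\[
|\Ai|\,u_i^\theta(a_0,a_{-i}) + \sum_{a\in\Ai} d(a) = 0,
\]
so $u_i^\theta(a_0,a_{-i}) = -|\Ai|^{-1}\sum_{a\in\Ai} d(a)$ is uniquely determined, and consequently so is $u_i^\theta(a,a_{-i}) = d(a) - |\Ai|^{-1}\sum_{a'\in\Ai} d(a')$ for every $a\in\Ai$.

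Finally I would argue uniqueness from the opposite direction: any two payoff vectors satisfying both the identified differences and the normalization must coincide. If $u$ and $\tilde u$ both induce the same conditional choice probabilities, then $\tilde u = u + c\cdot \mathbf{1}$ for some scalar $c$ by Theorem~\ref{thm:ident_diffs}; imposing $\sum_a \tilde u(a)=\sum_a u(a)=0$ forces $|\Ai|\cdot c = 0$, hence $c=0$. Since the argument is made pointwise in $a_{-i}$, it yields identifiability of $u_i^\theta(a_i,a_{-i})$ for all $(a_i,a_{-i})$, and repeating over $i$ completes the proof. I do not anticipate a serious obstacle here: the only subtlety is making clear that the normalization is a gauge-fixing linear constraint that intersects each equivalence class $\{u + c\mathbf{1}\}$ in exactly one point, which follows because $\mathbf{1}$ is not orthogonal to itself.
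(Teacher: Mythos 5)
Your proposal is correct and follows essentially the same route as the paper's proof: invoke Theorem~\ref{thm:ident_diffs} to reduce the ambiguity to a single additive constant per $(i,a_{-i})$, then observe that shifting by $c$ changes the sum by $|\Ai|c$, so the zero-sum normalization pins $c$ down uniquely. Your version is simply more explicit (constructing the constant via a reference action and adding the converse uniqueness check), but there is no substantive difference in the argument.
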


\begin{proof}
By Theorem~\ref{thm:ident_diffs}, $u_i^\theta(\cdot,a_{-i})$ is determined up to additive constant $c_i(a_{-i})$.
The normalization fixes $c_i(a_{-i})$ uniquely since shifting by $c$ changes the sum by $|\Ai|c$.
\end{proof}

\subsection{Consistency of maximum likelihood estimation in a parametric mechanism class}
We state a standard M-estimation consistency result specialized to the conditional logit model, which applies directly to DIML when the learning dynamics induce conditionals of the form~\eqref{eq:cond_logit} and the model is correctly specified.

Let $\ell(\theta; a_i, a_{-i})$ denote the negative log-likelihood of observing $a_i$ given $a_{-i}$ under~\eqref{eq:cond_logit}.
Define the population risk $L(\theta)\coloneqq \E[\ell(\theta; a_i, a_{-i})]$ under the true data-generating parameter $\theta^\star$.

\begin{theorem}[Consistency of MLE (conditional logit)]
\label{thm:mle}
Assume:
(i) data $\{(a_i^{(k)},a_{-i}^{(k)})\}_{k=1}^N$ are i.i.d.\ from~\eqref{eq:cond_logit} induced by $\theta^\star$,
(ii) $\theta$ belongs to a compact parameter space $\Theta$,
(iii) $\ell(\theta;\cdot)$ is continuous in $\theta$ and dominated by an integrable envelope,
(iv) $L(\theta)$ has a unique minimizer at $\theta^\star$ (e.g., identifiability after gauge fixing in Corollary~\ref{cor:gauge}).
Let $\hat\theta_N\in\arg\min_{\theta\in\Theta}\frac{1}{N}\sum_{k=1}^N \ell(\theta; a_i^{(k)},a_{-i}^{(k)})$ be an MLE.
Then $\hat\theta_N \xrightarrow{p} \theta^\star$ as $N\to\infty$.
\end{theorem}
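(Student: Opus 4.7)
The plan is to apply the standard extremum-estimator consistency argument (e.g., Newey and McFadden 1994, Theorem 2.1, or van der Vaart 1998, Theorem 5.7), which reduces to verifying two ingredients: (a) a uniform law of large numbers (ULLN) for the empirical criterion $L_N(\theta) \coloneqq \frac{1}{N}\sum_{k=1}^{N}\ell(\theta;a_i^{(k)},a_{-i}^{(k)})$ on $\Theta$, and (b) a well-separated unique minimizer of $L$ at $\theta^\star$. Given (a) and (b), the argmin of the empirical criterion converges in probability to the argmin of the population criterion.

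First I would discharge the ULLN. Compactness of $\Theta$ (assumption (ii)), pointwise continuity of $\theta\mapsto\ell(\theta;\cdot)$ (assumption (iii)), and the integrable envelope (assumption (iii)) are exactly the hypotheses of the classical Jennrich/Glivenko–Cantelli uniform convergence theorem for i.i.d.\ data, so one concludes $\sup_{\theta\in\Theta}|L_N(\theta)-L(\theta)|\xrightarrow{p}0$. Next I would justify that $\theta^\star$ is the unique minimizer of $L$ using a Kullback–Leibler decomposition: because the model is correctly specified,
\[
L(\theta) - L(\theta^\star) \;=\; \E_{a_{-i}}\!\bigl[\KL\bigl(p_{\theta^\star}(\cdot\mid a_{-i})\,\big\|\,p_{\theta}(\cdot\mid a_{-i})\bigr)\bigr] \;\geq\; 0,
\]
with equality iff the conditional distributions agree for almost every $a_{-i}$. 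By Theorem~\ref{thm:ident_diffs} together with the gauge-fixing of Corollary~\ref{cor:gauge} (which is what assumption (iv) encodes), equal conditional distributions force $\theta=\theta^\star$, giving a strict, unique minimizer.

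The final step is to upgrade uniqueness to well-separation and conclude. On a compact $\Theta$ with continuous $L$, a unique minimizer automatically satisfies $\inf_{\theta:\|\theta-\theta^\star\|\geq\epsilon}L(\theta)>L(\theta^\star)$ for every $\epsilon>0$ (otherwise a convergent subsequence would produce a second minimizer). The standard "argmin sandwich" then gives $L(\hat\theta_N)\leq L_N(\hat\theta_N)+o_p(1)\leq L_N(\theta^\star)+o_p(1)\to L(\theta^\star)$, and well-separation forces $\hat\theta_N\xrightarrow{p}\theta^\star$.

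The main obstacle is not any of the M-estimation mechanics, which are entirely routine here — it is justifying that assumption (iv) is substantive rather than vacuous once $M_\theta$ is a neural network. Theorem~\ref{thm:ident_diffs} identifies only payoff \emph{differences}, and many $\theta$ (gauge shifts, hidden symmetries of the parameterization, permutations of neurons) can induce the same conditional distributions. In a clean proof I would therefore either (i) quotient $\Theta$ by the gauge group and the parameterization's internal symmetries, or (ii) state the theorem for the induced map $\theta\mapsto\{u_i^\theta(\cdot,a_{-i})-\bar u_i^\theta(a_{-i})\}$ rather than for $\theta$ itself, making (iv) a hypothesis on the functional image. Once (iv) is taken as a clean assumption on an identifiable parameterization, the ULLN–KL–well-separation pipeline above closes the argument.
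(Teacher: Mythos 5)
Your proposal is correct and follows essentially the same route as the paper's proof, which likewise invokes the uniform law of large numbers under (ii)--(iii), takes the unique minimizer from (iv), and concludes by standard M-estimation argmin continuity. You simply fill in more detail than the paper does (the KL decomposition behind (iv), well-separation via compactness, and the argmin sandwich), and your caveat about (iv) being delicate for overparameterized neural $M_\theta$ is a fair observation, but both arguments are the same.
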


\begin{proof}
Under (ii)--(iii), the uniform law of large numbers implies $\sup_{\theta\in\Theta}\left|\frac{1}{N}\sum_{k=1}^N \ell(\theta;\cdot) - L(\theta)\right|\to 0$ in probability.
Under (iv), $L(\theta)$ has a unique minimizer at $\theta^\star$.
Standard M-estimation consistency (argmin continuity) yields $\hat\theta_N\xrightarrow{p}\theta^\star$.
\end{proof}

\begin{remark}[Unknown temperature]
If $\beta$ is unknown, Theorem~\ref{thm:ident_diffs} identifies payoff differences only up to a positive scale factor.
DIML can either estimate $\beta$ jointly (adding it to $\theta$) or fix $\beta$ and interpret inferred payoffs up to scale; regularization on payout magnitude further stabilizes learning.
\end{remark}

\section{Experiments}

\begin{figure*}[t]
    \centering

    \begin{subfigure}[t]{0.49\textwidth}
        \centering
        \includegraphics[width=\linewidth]{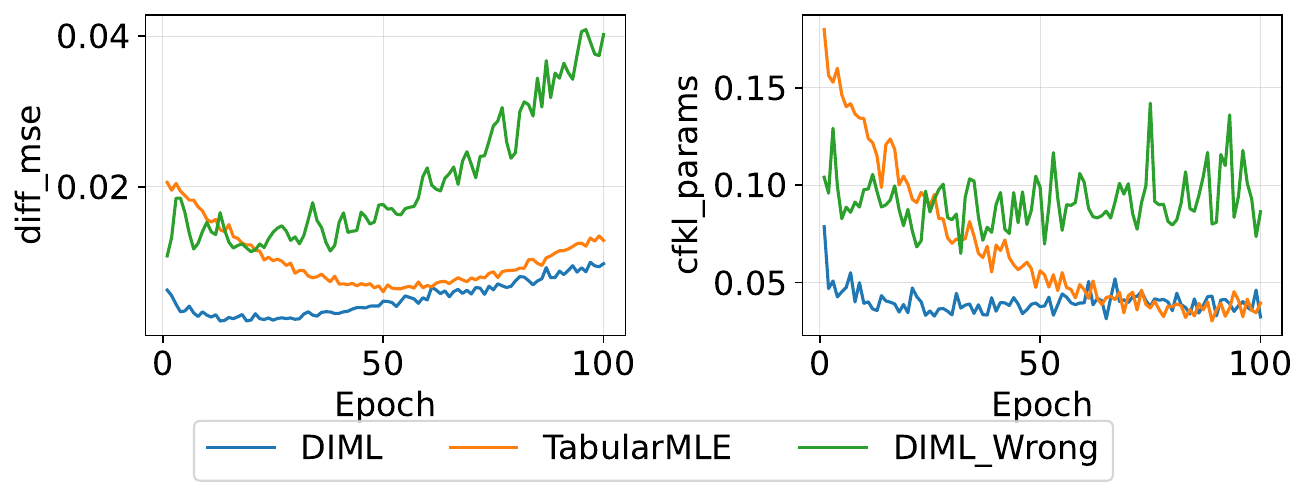}
        \caption{E1: Unstructured neural mechanism.}
        \label{fig:e1_grid}
    \end{subfigure}
    \begin{subfigure}[t]{0.49\textwidth}
        \centering
        \includegraphics[width=\linewidth]{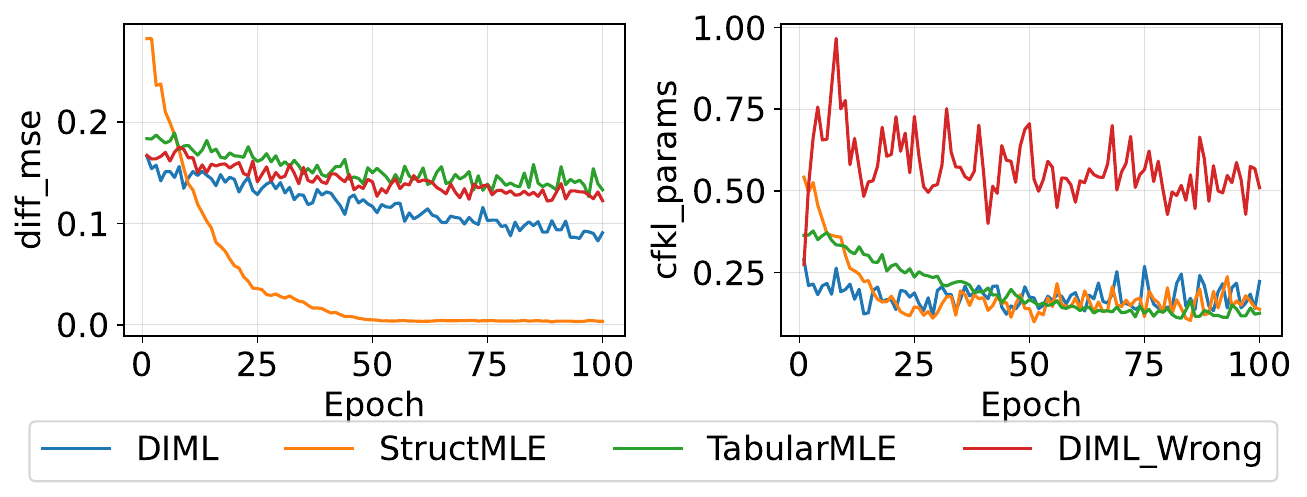}
        \caption{E2: Congestion tolling.}
        \label{fig:e2_grid}
    \end{subfigure}

    \vspace{0.6em}

    \begin{subfigure}[t]{0.49\textwidth}
        \centering
        \includegraphics[width=\linewidth]{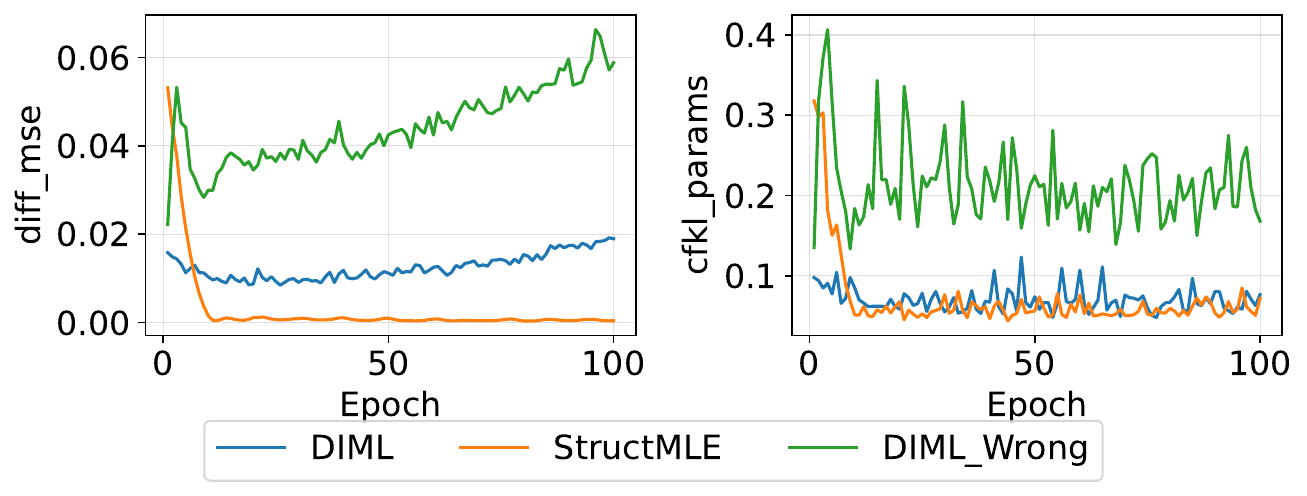}
        \caption{E3: Public goods.}
        \label{fig:e3_grid}
    \end{subfigure}
    \begin{subfigure}[t]{0.49\textwidth}
        \centering
        \includegraphics[width=\linewidth]{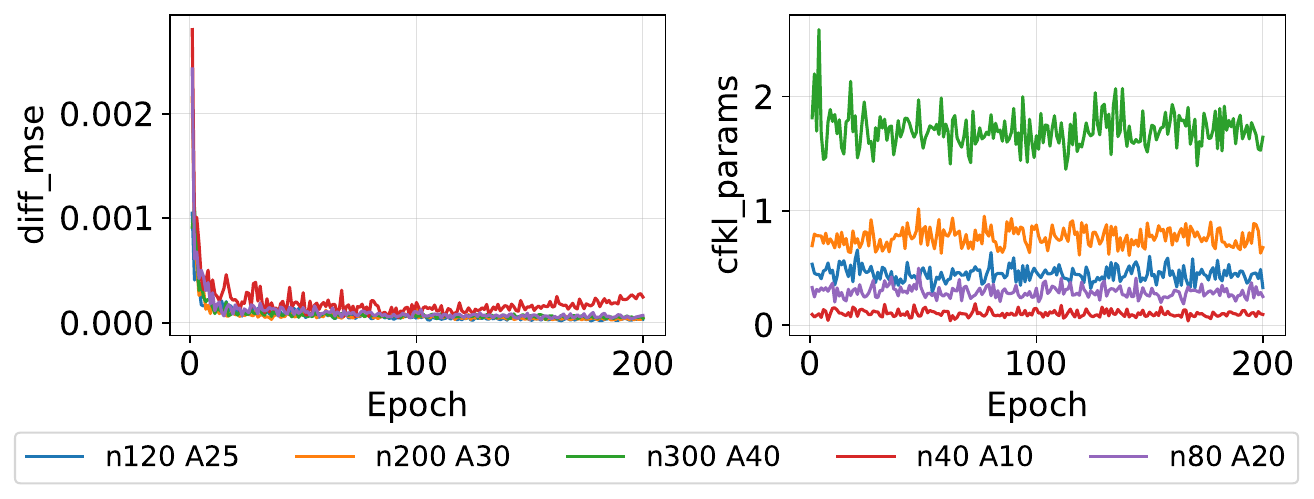}
        \caption{E4: Large-scale scaling comparison.}
        \label{fig:e7_grid}
    \end{subfigure}

    \caption{\textbf{Mechanism recovery and counterfactual validity.}
    Left panel in each subfigure: payoff-difference error (\texttt{diff\_mse}); right panel: counterfactual KL under learner-parameter shifts (\texttt{cfkl\_params}).}
    \label{fig:diml_grid}
\end{figure*}

We evaluate DIML across progressively more complex environments and regimes. The experiments are designed to answer three core research questions:

\emph{- RQ1 (On-support identifiability)}: Can the arbitrary underlying incentive mechanism be correctly inferred from observed learning behavior, on the behaviorally visited region of the joint action space?

\emph{- RQ2 (Robustness to inductive bias)}: How sensitive is inference to misspecification of the agents’ learning dynamics, and how does it compare to behavior-only baselines?

\emph{- RQ3 (Scalability)}: Does the approach scale to large multi-agent systems where tabular inverse methods are computationally infeasible?

To answer these questions, we design four experiments (E1–E4) spanning neural mechanisms, structured economic environments, and large-scale anonymous games.


\subsection{Experiment Setup}

We now introduce the experiment setups in detail.

\textbf{Agents.} Across all experiments, agents interact repeatedly in a stateless (or contextual) normal-form game. Each agent maintains Q-values over actions and updates them using a logit Q-learning rule parameterized by learning rate $\alpha$, inverse temperature $\beta$, and exploration rate $\epsilon$. The generator learner (used to produce data) and the inference learner (assumed by DIML) may differ, enabling controlled misspecification tests.

\textbf{Mechanisms and Experiments.} A mechanism maps joint actions (or action-count summaries) to per-agent payoffs. In the experiments, we cover three types of mechanisms: 1) Unstructured neural mechanisms, 2) Structured mechanisms, and 3) Symmetric count-based mechanisms. We evaluate DIML in these mechanisms in separate experiments:

\emph{- E1 (Neural mechanism, 4 agents, 5 actions):} The mechanism is a fully-connected, unconstrained, randomized, and frozen neural network over joint actions.

\emph{- E2 \& E3 (Structured mechanism, 3 agents, 7 actions):} The mechanisms are structured economic rules, where payoffs depend on (i) congestion: agents choosing popular actions incur higher costs; and (ii) collective contributions, agents contribute to a public good with diminishing returns.

\emph{- E4 (Large scale, 40-300 agents, 10-40 actions):} Agents interact in a symmetric, anonymous game where payoffs depend only on an agent’s action and the aggregate action-count vector of others. The mechanism is a fully-connected, unconstrained, randomized, and frozen neural network over one agent's action and the aggregated count vector of all agents' actions.

\subsection{Baselines}
\label{sec:baselines}

We compare DIML against baselines that isolate different sources of inductive bias and test whether recovering incentives requires explicit modeling of learning dynamics.


\paragraph{Tabular maximum likelihood estimation (Tabular MLE).}
When the joint action space $\A$ is small enough to enumerate, we parameterize a general tabular mechanism $M_\theta(a_{1:n}) = U_\theta[a_{1:n}] \in \R^n, U_\theta \in \R^{|\A|\times n},$
and fit $\theta$ by maximum likelihood under the same unrolled logit-Q learning dynamics as DIML.
That is, Tabular MLE minimizes $\mathcal{L}(\theta;D)
=
-\sum_{\tau\in D}\log p_\theta(\tau),$
where $p_\theta(\tau)$ is defined by Equations~\eqref{eq:q_update}--\eqref{eq:traj_ll_factor}.
This baseline is oracle-like in expressivity but computationally feasible only when $|\A|$ is small; it is omitted in large-scale experiments where $|\A|^n$ is astronomically large.

\paragraph{Structured parametric MLE (StructMLE).}
In environments with known economic structure (congestion tolling and public goods), we include a correctly specified parametric mechanism family $M_\theta(a_{1:n}) = f_\theta(a_{1:n}),$
where $f_\theta$ encodes the true structural form (e.g., congestion costs as a function of action counts, or public-good benefits as a function of total contribution).
Parameters $\theta$ are estimated by maximum likelihood using the same unrolled learning dynamics and objective as DIML: $\hat\theta
\in
\arg\min_\theta
-\sum_{\tau\in D}\log p_\theta(\tau).$
This baseline isolates the benefit of correct inductive bias and serves as a reference for how well DIML’s flexible neural parameterization can match a well-specified structural model.

\paragraph{DIML with misspecified learner (DIML-Wrong).}
To test robustness to learning-rule misspecification, we include an ablation where the inference procedure assumes incorrect learning hyperparameters (e.g., an incorrect inverse temperature $\beta$), while keeping the same mechanism class and likelihood objective.
Formally, DIML-Wrong optimizes $-\sum_{\tau\in D}\log p_{\theta,\tilde\Pi}(\tau),$
where $\tilde\Pi$ denotes a misspecified learning model.
This baseline separates failures due to incorrect incentive modeling from those due to imperfect assumptions about agent learning dynamics.

\subsection{Metrics and reporting}
\label{sec:metrics}

Inverse mechanism learning must be evaluated beyond behavioral imitation.
A method may fit observed actions well while recovering incorrect incentives.
Accordingly, we report metrics that align with identifiability, likelihood-based behavioral fit, and counterfactual validity.

\paragraph{Payoff-difference error (\texttt{diff\_mse}).}
Under logit-based response models, utilities are identifiable only up to additive, action-independent constants.
We therefore evaluate recovery using payoff differences:
\begin{multline*}
\texttt{diff\_mse}
=
\E_{(i,a_{-i})}
\E_{a,a'\sim\mathrm{Unif}(\Ai)}
\Big[
\big(
(u_i(a,a_{-i})\\
-u_i(a',a_{-i}))
-
(\hat u_i(a,a_{-i})-\hat u_i(a',a_{-i}))
\big)^2
\Big],
\end{multline*}
where $u_i(a,a_{-i})=[M_{\theta^\star}(a,a_{-i})]_i$ and $\hat u_i$ is defined analogously.
The expectation is approximated by sampling agent--time contexts from held-out trajectories and random action pairs.
This metric directly corresponds to the identifiability guarantee in Theorem~\ref{thm:ident_diffs}.


\paragraph{Counterfactual validity under learning-rule shifts (\texttt{cfkl\_params}).}
To assess whether inferred mechanisms support reliable counterfactual reasoning, we change agent learning parameters $(\alpha,\beta,\varepsilon)$ and simulate new trajectories under both the ground-truth mechanism $M^\star$ and the inferred mechanism $\hat M$.
Let $p_{M}^{(\alpha',\beta',\varepsilon')}$ denote the induced joint-action distribution under the intervened learner.
We report
\[
\texttt{cfkl\_params}
=
\KL\!\left(
p_{M^\star}^{(\alpha',\beta',\varepsilon')}
\;\middle\|\;
p_{\hat M}^{(\alpha',\beta',\varepsilon')}
\right).
\]
For small joint action spaces, the KL divergence is computed exactly over joint-action frequencies.
For large-scale symmetric environments, we compute KL over a count-based hash representation of action histograms, which preserves the anonymity structure of the game.

\paragraph{Reporting protocol.}
Tabular MLE is reported only when computationally feasible; for large-scale settings it is omitted by design.
All metrics are computed on held-out test data using fixed random seeds for reproducibility.

\subsection{Results}
\label{sec:results}

Figure~\ref{fig:diml_grid} shows learning curves for payoff-difference error (\texttt{diff\_mse}) and counterfactual validity (\texttt{cfkl\_params}).
Across all experiments, three consistent observations emerge.
First, learning-aware inverse recovers incentives and counterfactuals.
In the unstructured setting (E1), DIML achieves the lowest \texttt{diff\_mse} and \texttt{cfkl\_params} throughout training.
Tabular MLE improves initially but plateaus at higher error, while DIML-Wrong diverges, especially in counterfactual KL.
This shows that modeling learning dynamics provides identifying signal beyond function approximation alone.
Second, correct structure helps.
In structured environments (E2--E3), the correctly specified baseline converges to near-zero \texttt{diff\_mse}.
DIML remains competitive despite lacking the true parametric form and consistently outperforms DIML-Wrong on both metrics.
DIML-Wrong incurs large counterfactual KL in all cases, demonstrating that accurate counterfactual inference is highly sensitive to learning-rule misspecification.
Third, in large-scale anonymous games (E7), DIML drives \texttt{diff\_mse} to very small values across all scales, showing no qualitative degradation.
In contrast, \texttt{cfkl\_params} increases smoothly with scale, reflecting the growing complexity of induced joint-action distributions rather than a failure of incentive recovery.

These three observations lead us to the answers to our research questions.
\textbf{RQ1 (Identifiability).} Low \texttt{diff\_mse} across all settings that are comparable with oracle methods confirms that payoff differences are identifiable from learning trajectories and can be recovered by DIML in practice, hence the mechanism.
\textbf{RQ2 (Inductive bias).} Correct structure is optimal when available, but DIML provides a robust alternative without requiring mechanism knowledge.  
\textbf{RQ3 (Scalability).} DIML scales in incentive recovery, while counterfactual distribution matching becomes the limiting factor at large scale due to the curse of dimensionality.

Overall, We showed that DIML reliably recovers identifiable incentives and supports counterfactual reasoning across unstructured, structured, and large-scale environments, provided the learning dynamics are correctly modeled.

\section{Limitations}
DIML relies on a modeled family of learning dynamics; misspecification can degrade recovery, as captured by DIML-WrongLearner.
Moreover, identifiability can fail without sufficient exploration and coverage of opponent action profiles.
Finally, in real systems mechanisms may be nonstationary; extending DIML to time-varying mechanisms is an important future direction.

\appendix

\section{Implementation Details (Appendix)}
We include a single-script experiment suite that covers the environments and baselines described in Section~6.

\section*{Ethical Statement}
This work develops methods to infer incentive mechanisms from observed behavior.
Potential beneficial uses include auditing incentive systems for fairness and transparency and diagnosing unintended incentives.
Potential harmful uses include reverse-engineering proprietary mechanisms for manipulation or collusion.
We recommend that applications incorporate governance constraints, responsible disclosure, and safeguards against misuse, and that evaluations consider robustness to strategic adversaries.


\bibliographystyle{named}
\bibliography{ijcai26}

\end{document}